\theoremstyle{plain}
\newtheorem{theorem}{Theorem}[section]
\newtheorem{lemma}[theorem]{Lemma}
\theoremstyle{definition}
\newtheorem{definition}[theorem]{Definition}
\theoremstyle{remark}
\icmltitlerunning{FOCUS: First Order Concentrated Updating Scheme}
\begin{document}

\twocolumn[
\icmltitle{FOCUS: First Order Concentrated Updating Scheme}



\icmlsetsymbol{equal}{*}

\begin{icmlauthorlist}
\icmlauthor{Yizhou Liu}{yyy}
\icmlauthor{Ziming Liu}{yyy}
\icmlauthor{Jeff Gore}{yyy}
\end{icmlauthorlist}

\icmlaffiliation{yyy}{Massachusetts Institute of Technology, Cambridge, MA 02139, USA}

\icmlcorrespondingauthor{Yizhou Liu}{liuyz@mit.edu}

\icmlkeywords{Optimization, Training Dynamics, LLM}

\vskip 0.3in
]



\printAffiliationsAndNotice{}  

\begin{abstract}
Large language models (LLMs) demonstrate remarkable performance, and improving their pre-training process appears to be key to enhancing their capabilities further. Based on the documented success of Adam, learning rate decay, and weight decay, we hypothesize that the pre-training loss landscape features a narrowing valley structure. Through experiments with synthetic loss functions, we discover that when gradient query noise is high relative to the valley's sharpness, Adam's performance falls behind that of Signum because Adam reduces the effective step size too drastically. This observation led us to develop FOCUS, an optimizer that enhances Signum by incorporating attraction toward moving averaged parameters, allowing it to handle noise better while maintaining larger step sizes. In training GPT-2, FOCUS proves to be more stable than Signum and faster than Adam. These results suggest that gradient noise may be an underappreciated limiting factor in LLM training, and FOCUS offers promising solutions.
\end{abstract}

\section{Introduction}\label{sec:intro}
\begin{figure*}[ht]
\vskip 0.2in
\begin{center}
\centerline{\includegraphics{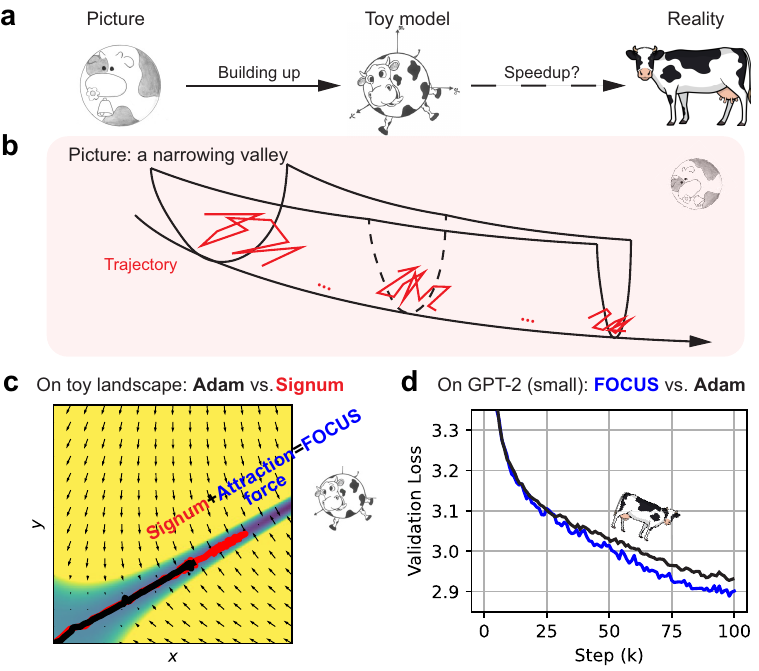}}
\caption{Insights from toy models lead to practical speedup in LLM pre-training. (a) Our philosophy that a simple and self-consistent picture provides a basis of thinking and testable hypotheses is not necessarily correct. Interactions between testing the real world and updating our picture keep us moving forward. (b) Narrowing valleys are assumed to be a key structure in LLM training loss. (c) The toy model explains that Adam can be slow with a large gradient noise (\cref{sec:method}). (d) By adding self-attraction to Signum, FOCUS is proposed to handle sharp landscapes with large gradient stochasticity and achieve practical speedups (\cref{sec:exp}).}
\label{fig:outline}
\end{center}
\vskip -0.2in
\end{figure*}

The emergence of large language models (LLMs) represents a transformative advancement in artificial intelligence. These systems have demonstrated remarkable capabilities that extend far beyond their initial purpose of processing and generating human language. From engaging in nuanced conversations and crafting creative content \cite{brown2020gpt3,openai2023gpt4,qin2023bardvisual} to solving complex mathematical problems \cite{lewkowycz2022solving,taylor2022galactica,wolfram2023integration,trinh2024alphageometry} and assisting with software development \cite{chen2021codex,github2022copilot}, LLMs continue to expand the boundaries of what artificial intelligence can achieve. 

To further advance the capabilities of LLMs, improving the pre-training is a critical research direction \cite{kaplan2020scaling,hoffmann2022chinchilla,liu2023sophia}. Better optimizers are desired to reduce time, cost, model size, etc. for reaching the same loss or to achieve lower losses given similar budgets. To this end, it is essential to understand the loss landscapes in LLM pre-training and design optimizers accordingly.

To gain insights into the training dynamics, we take the physicists' approach of simplifying the real world into a picture and a toy model (\cref{fig:outline}a).  
A picture describes only the important features and ignores all other details. A toy model then specifies the picture with quantification. We can gain insights and generate testable hypotheses through a systematic study of the toy model. Then, we can apply our insights to engineer the real systems or we may fail and update our picture.

In this paper, we start in \cref{sec:picture} to build a picture for pre-training landscapes, i.e., a narrowing valley (\cref{fig:outline}b), based on previous results \cite{wei2019noise,liu2023sophia,wen2024understanding}. Depending upon the topology of the loss landscape, different optimizers may respond differently.
We next construct loss landscapes and study the behaviors of different optimizers in \cref{sec:toy}. We find when the noise in gradient is too large (see an example in \cref{fig:outline}c), Adam~\cite{kingma2014adammethodstochasticoptimization,loshchilov2018decoupled} performs worse than Signum~\cite{bernstein2018signsgd} which takes constant size steps based on the sign of the momentum (smoothed gradient). Adam decreases its effective step size too much when the gradient is too noisy while Signum fixes the step size. Inspired by this observation, we designed the FOCUS optimizer by adding an attraction force towards the moving averaged parameters to Signum, so that FOCUS can squeeze into valleys with fixed step sizes (see \cref{sec:focus}).
In GPT-2 (small) pre-training, FOCUS is more stable than Signum and Adam with similar learning rates. FOCUS is favored compared to Adam (\cref{fig:outline}d), achieving a twice speedup compared to Adam from the literature in terms of training time (see details in \cref{sec:exp}). We provide a convergence analysis of FOCUS in \cref{sec:theory}, discuss related works in \cref{sec:relate}, and conclude with physics intuitions in \cref{sec:disscuss}.

Our results contribute to both \textbf{scientific understanding} of training dynamics and \textbf{practical speedup} in LLM pre-training. We propose a minimal model with sharp valley and gradient noise to study training dynamics. We obtain a better understanding of Adam's advantage in handling the sharpness and its limitation when gradient noise is large. Finally, we propose FOCUS using attraction force to squeeze into valleys without decreasing effective step size much, which gains actual speedup in training GPT-2 (small). We anticipate that FOCUS will provide new insights and directions for designing optimizers that will inspire a variety of new optimizers.

\section{Methods}\label{sec:method}

\subsection{Picture of loss landscapes}\label{sec:picture}

In this part, we first describe the narrowing valley picture depicted in \cref{fig:outline}b and then explain why this picture is relevant to the actual pre-training losses of LLMs.

A valley structure means heterogeneous curvatures across parameter dimensions. There are many sharp and flat dimensions, where the sharp ones form the walls of the valley and the valley extends along the flat ones (\cref{fig:outline}b). Going down along the valley, one obtains lower loss. However, the sharp directions may be sharper or more. For an optimizer, it is easy to find the bottom of one valley along the sharp directions, but it may be challenging to stay at the bottom and follow the flat directions to move forward. We summarize the picture of the loss landscape as a ``narrowing valley".

The evidence for heterogeneous curvatures is explained as follows. 
A direct study of Hessian of GPT-2 supports our picture \cite{liu2023sophia}.
Theory suggests that large uncertainty variation in data distribution can lead to heterogeneous curvatures \cite{wen2024understanding}, which is likely to be true in language as the same phrase can naturally have many different continuations. 
A piece of side evidence is that Adam and its variants were shown to outperform stochastic gradient descent (SGD) in LLM training \cite{zhao2024deconstructing}.
Adam is better at dealing with heterogeneous directions since it normalizes the updates in different parameter directions to be of the same order of magnitude. However, SGD can be extremely slow along the flat directions if it wants convergence along the sharp directions \cite{liu2023sophia}.

Another important feature is that the valley is getting narrower and narrower. We conjecture this property due to the success of some learning rate schedulers \cite{loshchilov2017sgdrstochasticgradientdescent,jin2023rethinkinglearningratetuning,Subramanian2024}: decreasing learning rate at the late stage of training can help to reach a lower loss. Due to the discretized nature of optimizers, a smaller learning rate is needed for convergence given larger curvature \cite{liu2023sophia}. For optimizers like Adam, a smaller learning rate helps to be close to the bottom, especially at places with large curvatures. In the continuous limit, learning rates also play the role of temperature \cite{shi2020learningratesschrodingeroperators,Liu2023quantumspeedups} and decreasing temperature makes ``bouncing particles" easier to squeeze into sharper valleys. According to different heuristics and theories, the fact that lowering learning rates works supports the narrowing valley picture.

Compared to the conventional picture of the loss landscapes, one important difference is that we do not draw the minimum (\cref{fig:outline}b). We are assuming that the true minimum is far from being reached. This is inspired by the fact that lowering weight decay can still result in larger parameter norms in current LLM pre-training \cite{andriushchenko2024why,wang2024setadamwsweightdecay}. This feature also indicates that the valley should be narrower when moving forward. Otherwise, the optimizer should reach the minimum easily even when far away.

Besides the key ingredients introduced, there are several factors worth mentioning. 
First, the loss function is defined by the distribution of data. Due to the finite batch size, the gradient obtained can be thought of as the gradient of the landscape with some noise. Second, the valley structure is one elementary ingredient. The landscape can be highly non-convex by having many valleys. As explained in Introduction (\cref{sec:intro}), our picture keeps the simplified fundamental structure and updates details when the original one does not suffice.
Finally, the picture merits study in its own right, independent of its applications to optimization. The picture offers a base of thinking and can help to generate all kinds of techniques and testable hypotheses. The picture is also open to modifications based on feedback from experiments on real systems.

\subsection{Toy model}\label{sec:toy}
\begin{figure*}[ht]
\vskip 0.2in
\begin{center}
\centerline{\includegraphics{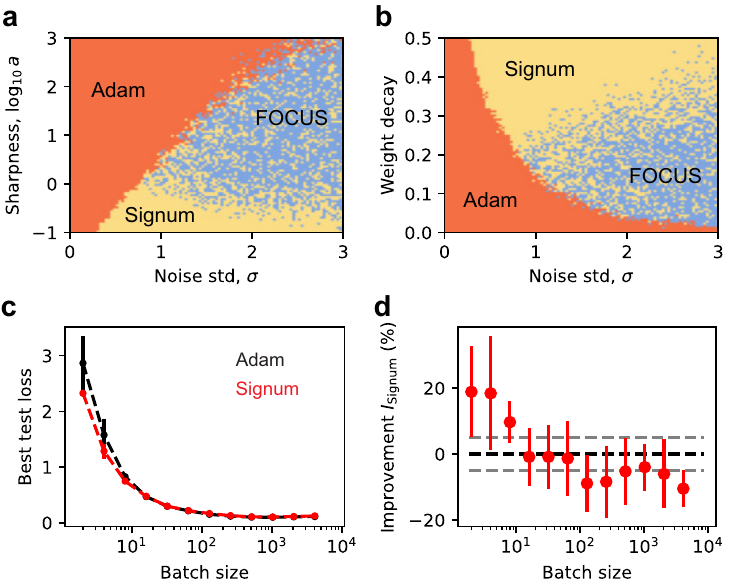}}
\caption{Signum outperforms Adam when gradient stochasticity is relatively large, and FOCUS further improves Signum when the valley-like landscape is sharp. (a) Signum outperforms Adam when gradient noise is large compared to sharpness. Orange pixels refer to conditions Adam is better and yellow parts mean Signum is better. FOCUS is even better than Signum after increasing sharpness. Blue pixels refer to conditions FOCUS is the best. (b) Increasing weight decay helps Signum and FOCUS against Adam. Yet the advantage of self-attraction is lost when weight decay is too large. (c and d) Gradient noise is large when batch size is small in practice. For MNIST classification, We find that increasing batch size (decreasing noise) can lead to a transition of the optimal optimizer from Signum to Adam, showing the insights from toy models are relevant to reality. Grey dashed lines in (d) highlight $\pm 5$ \%. The error bars represent standard deviations. Experiment details are in \cref{app:models}.}
\label{fig:toy}
\end{center}
\vskip -0.2in
\end{figure*}

In this section, we present our toy landscapes in the spirit of ``narrowing valleys" and then our insights gained from the experiments. 

Our toy loss function defined on $x$ and $y$ (can be thought of as model parameters) is given by
\begin{align}
    L(x,y) = \frac{a}{2}u^2v^2 - cu,
    \label{eq:toyloss}
\end{align}
where $u$ and $v$ depends on $x$ and $y$ via a rotation, $a$ and $c$ are two hyperparameters controlling the loss. 
The valley direction is in general not aligned with the model parameters.
\cref{fig:outline}c depicts an example of the loss where $u$ is rotated counter-clockwise from $x$ by $\pi/6$.
The term $- cu$ ($c>0$) ensures that $+u$ is the direction along the valley to lower loss values.
The curvature of the orthogonal direction $v$ is given by $au^2$ which is getting larger when moving along the valley (going to greater $u$).  

Upon building the landscape, we start our exploration with the small motivating question--why can Signum be comparable to Adam in LLM pre-training \cite{zhao2024deconstructing}?
Adam updates its parameters by $-\eta \hat{m}_t / \sqrt{\hat{v}_t}$ at step $t$ \cite{kingma2014adammethodstochasticoptimization}, where $\eta$ is the learning rate, $\hat{m}_t$ is the centered exponential moving average (EMA) of gradient, and $\hat{v}_t$ is the centered EMA of the squared gradient.
Specifically, we have
\begin{align}
    m_t = m_{t-1}\beta_1 + (1-\beta_1)g_t,
\end{align}
where $g_t$ (a vector) is the gradient queried at step $t$ and $0<\beta_1<1$ is a hyperparameter controlling how fast old gradients are forgotten. Similarly, we have
\begin{align}
    v_t = v_{t-1}\beta_2 + (1-\beta_2)g_t^2.
\end{align}
All operations on vectors are element-wise. The centered averages $\hat{m}_t$ and $\hat{v}_t$ are normalized from $m_t$ and $v_t$, respectively, since the initialization $m_0=v_0=0$ introduces bias towards $0$ \cite{kingma2014adammethodstochasticoptimization}.
The idea is that along sharp directions, corresponding elements in $g_t$ may bounce around $0$ such that $\hat{m}_t / \sqrt{\hat{v}_t}$ is small and the effective step size is small along those directions. While for flat directions, corresponding elements in $g_t$ are consistent across different $t$, then $\hat{m}_t / \sqrt{\hat{v}_t}$ can be large or even close to $1$. By construction, Adam is good at squeezing into narrowing valleys. Throughout the paper, when we need to add weight decay, we follow the method in AdamW \cite{loshchilov2018decoupled}.

Signum can be considered a simplified version of Adam that only uses momentum. Signum updates the parameters by $-\eta \mathrm{sign}(\hat{m}_t)$ at step $t$. Signum uses fixed step size, losing the ability to adjust step size based on sharpness or the degree of query noise. Consequently, we would expect Signum to oscillate more violently and perform much worse than Adam in sharp valleys.

However, this expectation is not always true \cite{zhao2024deconstructing}. We compare Signum and Adam on our toy landscape to get possible explanations for the phenomenon.
First of all, we indeed find that Signum can be faster than Adam and can squeeze into deeper places along the valley under the same hyperparameter settings (\cref{fig:outline}c). We discovered this case when examining the effect of noise in gradient estimation, e.g., due to finite batch sizes. We model this stochasticity by multiplying the true gradient of the loss \cref{eq:toyloss} by a random variable with mean $1$ and standard deviation $\sigma$ when the gradient is queried. We obtain a hypothesis that large noise may change the competition between Adam and Signum.

Next, we compare Adam and Signum more systematically and fairly. As explained in \citet{zhao2024deconstructing}, different optimizers can have different optimal learning rates and we need to compare their best performances with their own best hyperparameters. Given a loss function, we scan $20$ learning rates between $10^{-3}$ and $1.0$ (uniform in log scale) for each optimizer. We run $50$ optimization tests for each learning rate where each test starts near the origin and runs for $10^3$ steps. The optimal learning rate is selected via the mean of final losses, typically around $10^{-2}$. 

In \cref{fig:toy}a, we examine the best performance of different optimizers at different valley sharpness, $a$, and noise standard deviation, $\sigma$ ($c=0.1$ is set to be constant). The orange region refers to the parameters where Adam is the best and the yellow region for those where Signum is the best. We find that given sharpness $a$ there is a critical noise strength $\sigma_c$ beyond which Signum will become better than Adam. Moreover, this critical noise strength $\sigma_c$ increases when sharpness $a$ increases. Intuitively, both large sharpness (violent oscillation) and large noise lead to a small signal-to-noise ratio $\hat{m}_t / \sqrt{\hat{v}_t}$. When a small signal-to-noise ratio is mainly due to sharpness, the small effective step size helps to get into the valley, and Adam is therefore faster. However, when a small signal-to-noise ratio is mainly due to noise, Adam decreases the effective step size too much and becomes slower. A small step size also makes Adam stop at a place closer to the origin given the same weight decay. We conclude that Signum outperforms Adam when the gradient noise is large compared to the sharpness of the valley. 

For completeness, we should also compare the performance of optimizers at their best weight decays (another hyperparameter they all have). On our toy landscape, the smaller the weight decay is, the better the performance will be, which is true for all optimizers tested. However, training LLM usually does not set zero weight decay for many reasons \cite{andriushchenko2024why}. Some non-zero small weight decay is needed for other aspects not described by our toy model. And the smallest possible weight decay will be the optimal one for all optimizers in terms of final loss. We then can choose that value for weight decay and compare different optimizers with their own best learning rates. Since the smallest possible weight decay is some unknown value, we test different weight decay values (\cref{fig:toy}b). We find no matter which weight decay is used, small noise prefers Adam and large noise prefers Signum as before. And at the same degree of stochasticity, Signum is favored at larger weight decay. Large weight decay may help Signum to be stable and stay closely in the valley.
We conclude from our toy model experiments that at various conditions, Signum outperforms Adam when the noise in gradients is relatively large. 

Finally, we try to test our findings from the toy model on real optimization problems. In particular, the noise of gradient queries in the toy model is artificial yet is believed to be related to finite batch size in real problems. We then would like to check whether in reality there is a transition from Signum outperforming Adam to Adam outperforming Signum when increasing batch sizes (decreasing gradient stochasticity). We use a six-layer multilayer perceptron (see \cref{app:mlp} for details) to do MNIST classification \cite{lecun1998mnist}. Given the batch size, we find the optimal learning rate for each optimizer by scanning $20$ learning rates from $10^{-4}$ to $1.0$ uniformly in log scale. For each learning rate, we run three replicates and each run has $400$ steps to ensure saturation. Each learning rate has a mean final loss over replicates and the best test loss is chosen from these mean final losses of different learning rates. The best test loss of Adam indeed is larger than that of Signum at small batch sizes but becomes smaller than that of Signum after the batch size gets large (\cref{fig:toy}c). This trend agrees with our insights obtained from the toy model. We define the improvement of Signum as
\begin{equation}
    I_{\rm Signum} = \frac{L_{\rm Adam} - L_{\rm Signum}}{L_{\rm Adam}},
\end{equation}
where $L_{\rm Adam}$ and $L_{\rm Signum}$ are best losses of Adam and Signum, respectively. The improvement $I_{\rm Signum}$ captures the transition from Signum being better to Adam being better when increasing batch sizes (\cref{fig:toy}c). We conclude that our finding on toy landscapes about the competition between Adam and Signum is relevant to realistic machine learning problems.

\subsection{FOCUS optimizer}\label{sec:focus}
\begin{algorithm}[tb]
   \caption{FOCUS (All operations on vectors are element-wise and $\beta_2^t$ means $\beta_2$ to the power $t$)}
   \label{alg:focus}
\begin{algorithmic}
   \STATE {\bfseries Input:} Initial parameter vector $\theta_1$, learning rate $\{\eta_t\}_{t=1}^T$, hyperparameters $\beta_1$, $\beta_2$, $\gamma$, $\omega$
   \STATE Set $m_0 = 0$, $\overline{\theta}_0=0$
   \FOR{$t=1$ {\bfseries to} $T$}
    \STATE Compute minibach loss $L_t(\theta_t)$
    \STATE Obtain gradient $g_t = \nabla L_t(\theta_t)$
    \STATE $m_t = \beta_1 m_{t-1} + (1-\beta_1) g_t$ (Update biased momentum)
    \STATE $\overline{\theta}_t = \beta_2 \overline{\theta}_{t-1} + (1-\beta_2) \theta_t$ (Update biased average parameters)
    \STATE $\hat{\theta}_t = \overline{\theta}_t / (1 - \beta_2^t)$ (Compute bias-corrected average parameters)
    \STATE $\theta_t = \theta_t - \eta_t \omega \hat{\theta}_t$ (Apply weight decay)
    \STATE $\theta_{t+1} = \theta_t - \eta_t (\mathrm{sign}(m_t) + \gamma \mathrm{sign}(\theta_t - \hat{\theta}_t))$
   \ENDFOR
\end{algorithmic}
\end{algorithm}

Good insights into the training dynamics should lead to practical speedup. To this end, we try to design new optimizers based on what we learned from the toy model experiments.
On the one hand, we want the new optimizer not to decrease the effective step size like Adam such that it can work well when gradient noise is large. On the other hand, we want the optimizer to be able to squeeze into the sharp valley. We then try to add some new mechanisms to Signum for better dealing with sharpness.
Imagining the picture \cref{fig:outline}b, we want the trajectory to be more compact. One way to realize that is to add an attraction force between the ``bouncing particles", i.e., parameter values at different time steps.

One realization of adding an attraction force term to Signum is our FOCUS (\textbf{F}irst \textbf{O}rder \textbf{C}oncentrated \textbf{U}pdating \textbf{S}cheme) optimizer in \cref{alg:focus}. The optimizer queries $0$th and $1$st order information. We use the word ``concentrated" to highlight the attraction force term. Upon getting gradient $g_t$, FOCUS updates the EMA of gradient, $m_t$, which is called momentum, as well as the EMA of parameters, $\overline{\theta}_t$.
The hyperparameters $\beta_1,\beta_2\in [0,1)$ are the decay rates of the EMAs of gradients and parameters, respectively.
Since we initialize $\overline{\theta}_0=0$, we need normalization $\hat{\theta}_t = \overline{\theta}_t / (1 - \beta_2^t)$ to have an unbiased estimation of the moving average of parameters \cite{kingma2014adammethodstochasticoptimization}. We do not compute bias-corrected momentum since we only need the sign later and normalization does not change the sign. We update the parameters by $- \eta_t \mathrm{sign}(m_t)$ which follows Signum and by $- \eta_t\gamma \mathrm{sign}(\theta_t - \hat{\theta}_t)$ which is the attraction force term. The hyperparameter $\gamma\in [0,1)$ controls the strength of the attraction.
The attraction moves parameters towards the center of the trajectory or bottom of the valley (\cref{fig:outline}b). In principle, we can make $\gamma$ negative and have repulsive forces, which is not of interest in this paper. We add weight decay with a similar method as AdamW \cite{loshchilov2018decoupled} and $\omega$ is the vector encoding weight decay (not all parameters have non-zero weight decay).

We next study the performance of FOCUS on our toy landscapes.
We follow the same procedure to find the best performance over different learning rates of FOCUS ($\gamma=0.2$) as Adam and Signum in \cref{fig:toy}. We use the blue region to denote the conditions where FOCUS is the best optimizer. We find that FOCUS outperforms Signum when sharpness is large with high probability (\cref{fig:toy}a). The checkered pattern between blue and yellow suggests that FOCUS and Signum reach similar depths, but FOCUS appears more stable staying deep in the valley. FOCUS cannot be better than Signum if weight decay is too large (\cref{fig:toy}b). A large weight decay helps Signum to stay close to the bottom of the valley and the attraction force no longer has an advantage. However, FOCUS does not outperform Adam more than Signum did as the boundary between Adam and FOCUS is almost the same as that between Adam and Signum (\cref{fig:toy}, a and b). We conclude that FOCUS is an improved Signum that can outperform Adam when the effect of gradient noise is larger than that of sharpness.

\section{Experiments}\label{sec:exp}
\begin{figure*}[ht]
\vskip 0.2in
\begin{center}
\centerline{\includegraphics{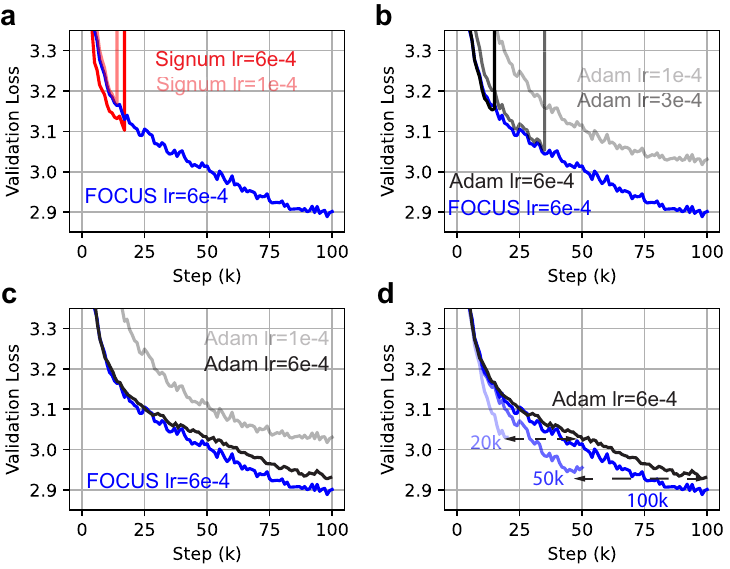}}
\caption{FOCUS is more stable and faster in training GPT-2 (small). (a) In \texttt{float16} and with the same hyperparameters, FOCUS is slower than Signum, but Signum is unstable. The smaller learning rate of Signum leads to slower training than FOCUS yet is still unstable. (b) Similarly, in \texttt{float16}, FOCUS is more stable than Adam and stable Adam training is much slower. (c) We copy the optimal performance of Adam (trained in \texttt{bfloat16}) from \citet{liu2023sophia} (black line), which is still slower than our FOCUS. (d) Decreasing the number of training steps of FOCUS (which also changes the learning rate scheduler), we find FOCUS can achieve a 2x speedup compared to Adam. More details in \cref{app:models}.}
\label{fig:exp}
\end{center}
\vskip -0.2in
\end{figure*}

\subsection{Setup}

Now, we are ready to evaluate whether our insights and proposed optimizer can contribute to LLM pre-training. We train a small GPT-2 model with 125 million parameters \cite{radford2019language,liu2023sophia} on OpenWebText \cite{gokaslan2019openwebtext}. Detailed model configuration can be found in \cref{app:gpt}.

We use the optimal performance of Adam (more precisely, AdamW \cite{loshchilov2018decoupled}) scanned over hyperparameters in \citet{liu2023sophia} as our baseline for comparison. Due to our limited computing resources, we cannot scan optimal performance for FOCUS systematically on GPT-2 training. For FOCUS, some hyperparameters around Adam's optimal choice, learning rate $6\times 10^{-4}$, $\beta_1=0.9$, $\beta_2=0.95$, and weight decay $0.1$, are tested. The idea is that if there exists any performance of FOCUS better than the best performance of Adam, the best performance of FOCUS is certainly better than that of Adam.

Our code implementation adopts those in \citet{liu2023sophia} to eight V100 GPUs with \texttt{float16}. We use a batch size of $480$. The learning rate is subject to a warm-up lasting two thousand steps and then a cosine schedule decreasing it to a final value around $0.05$ times the peak learning rate \cite{rae2021scaling}. If not specified, ``learning rate" in this section refers to the peak learning rate. A standard gradient clipping (by norm) is used with a threshold of $1.0$.

In the following, we show our results of GPT-2 training and compare them to our intuition from toy model experiments.

\subsection{Stability} 
Recall that FOCUS improves Signum when facing a sharp landscape with large gradient noise. We compare FOCUS ($\gamma = 0.2$, $\beta_1=0.9$, $\beta_2 = 0.99$, weight decay $0.2$) with Signum (i.e., FOCUS having $\gamma = 0$, $\beta_1=0.9$, $\beta_2 = 0$, and weight decay $0.2$). With the same learning rate $6\times 10^{-4}$ and the number of training steps $10^5$, we find FOCUS is slower than Signum at the early stages. If actual parameters are ``ahead" of the EMA of parameters, attraction force effectively decreases the step size from $1$ to $1-\gamma$. However, Signum rapidly becomes unstable (\cref{fig:exp}a). We then decrease the learning rate to $10^{-4}$ for Signum, yet still cannot obtain a stable training process. And at a learning rate $10^{-4}$, Signum is already slower than FOCUS. In training GPT-2, FOCUS outperforms Signum because FOCUS is much more stable and picks much larger learning rates.

We next compare FOCUS with Adam on our machines. We find the optimal hyperparameters reported in \citet{liu2023sophia} for Adam (i.e., learning rate $6\times 10^{-4}$, $\beta_1=0.9$, $\beta_2=0.95$, and weight decay $0.1$) lead to unstable training on V100 GPUs with \texttt{float16} (\cref{fig:exp}b). We doubled the weight decay but did not obtain stable training (see \cref{app:result}). Then, we decrease the learning rate to $3\times 10^{-4}$ and $10^{-4}$. Finally, Adam becomes stable at a learning rate $10^{-4}$ but yields much slower training and higher final validation loss than FOCUS (\cref{fig:exp}b). We conclude that FOCUS is more stable than Adam in \texttt{float16}.

\subsection{Speedup}

So far, we can claim FOCUS gains speedup compared to Signum and Adam on the machines we use. However, the speedup is mainly due to stability issues of other algorithms which may not be a problem on better machines or floating-point format (e.g., \texttt{bfloat16}) that are used widely.

To see whether FOCUS can have a more relevant speedup, we compare FOCUS with the optimal performance of Adam in \citet{liu2023sophia}. Adam is stable with a large learning rate $6\times 10^{-4}$ possibly due to the use of \texttt{bfloat16}. We observe that given the same training steps (i.e., $10^5$), one training case of FOCUS reaches a lower final loss than the optimal Adam reported in \citet{liu2023sophia} (\cref{fig:exp}c). If we decrease the training steps of FOCUS, which makes the scheduler decrease learning rate faster (the final learning rate remains the same), we find that FOCUS approximately achieves a twice speedup: the loss of FOCUS running $5\times 10^4$ steps is close to that of Adam running $10^5$ steps and the loss of FOCUS running $2\times 10^4$ steps is close to that of Adam running $5\times 10^4$ steps (\cref{fig:exp}d). Even though Adam can avoid instability issues via other techniques, FOCUS can still outperform Adam.

To conclude, our FOCUS optimizer, which aims to improve Signum to better handle sharpness and to avoid decreasing effective step size too much like Adam when gradient stochasticity is large, gains practical speedup in training GPT-2 compared to Adam. According to our toy model, this result suggests that LLM training has relatively large gradient noises, which may be a key limiting factor in training and needs more attention in future studies.

\section{Convergence analysis}\label{sec:theory}
As a sanity check, we analyze the worst-case convergence rate of FOCUS in the online convex optimization framework \cite{zinkevich2003online,kingma2014adammethodstochasticoptimization}. We consider the optimization of a sequence of convex loss functions $\{L_t\}_{t=1}^T$ over a convex set $\mathcal{F} \subseteq \mathbb{R}^d$.
\begin{definition}[Online Optimization Protocol]
\label{def:online}
At each time step $t$:
(1) The optimizer goes to parameters $\theta_t \in \mathcal{F}$;
(2) The loss function $L_t$ is revealed;
(3) The optimizer incurs loss $L_t(\theta_t)$ and observes gradient $g_t = \nabla L_t(\theta_t)$.
\end{definition}
The algorithm is evaluated via regret since the sequence is unknown in advance \cite{zinkevich2003online}.
The regret after $T$ iterations is
\begin{align}
    R(T) = \sum_{t=1}^T [L_t(\theta_t) - L_t(\theta^*)],
\end{align}
where $\theta^* = \mathrm{arg}\min_{\theta\in \mathcal{F}}\sum_{t=1}^TL_t(\theta)$.

Our analysis shows that FOCUS has a regret bound $O(\sqrt{T})$ which is the same as Adam in terms of dependence on the order of $T$ \cite{kingma2014adammethodstochasticoptimization}. In other words, the mean regret $R(T)/T$ converges as $O(1/\sqrt{T})$. The result is stated formally below and the proof is in \cref{app:theory}.
\begin{theorem}[Regret Bound]
\label{thm:main}
Let $\{L_t\}_{t=1}^T$ be a sequence of convex functions with bounded gradients $\|g_t\|_\infty \leq G_\infty$. For the FOCUS optimizer with learning rate $\eta_t = \eta/\sqrt{t}$, $\beta_1, \beta_2 \in [0,1)$, $\beta_{1,t} = \beta_1 \lambda_\beta^t$, $\gamma_t = \gamma \lambda_\gamma^t$,\footnote{We generalize the optimizer to have a time-dependent decay rate, i.e., $\beta_{1,t}$, and a time-dependent force strength, i.e., $\gamma_t$.} and $\lambda_\beta, \lambda_\gamma\in [0,1)$, assuming the distance between any two points in $\mathcal{F}$ are bounded in infinity norm by $D_\infty$ and in 2-norm by $D$, the regret is bounded by:
\begin{align}
    R(T) \leq &\frac{G_\infty(D^2 + d D_\infty^2\sqrt{T})}{2\eta(1-\beta_1)} + \frac{G_\infty \eta d \sqrt{T}}{(1-\beta_{1})}(1+|\gamma|)^2\nonumber \\
    + &\Big(\frac{\beta_{1}\lambda_\beta d}{1-\lambda_\beta}
    + \frac{|\gamma| \lambda_\gamma d}{1-\lambda_\gamma}\Big)\frac{G_\infty D_\infty}{1-\beta_{1}} 
\end{align}
\end{theorem}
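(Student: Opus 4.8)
The plan is to adapt the online-convex regret analysis of Adam \cite{kingma2014adammethodstochasticoptimization,zinkevich2003online} to the sign-based FOCUS update, carrying the self-attraction term as a perturbation; the stated bound has no dependence on $\omega$, so I take $\omega=0$ in the analysis (decoupled weight decay only tightens the estimate on a bounded feasible set). By convexity of each $L_t$ we have $R(T)\le\sum_{t=1}^T\langle g_t,\theta_t-\theta^*\rangle=\sum_{t=1}^T\sum_{i=1}^d g_{t,i}(\theta_{t,i}-\theta^*_i)$, so it suffices to bound the right side coordinate by coordinate. First I would eliminate the momentum: from $m_{t,i}=\beta_{1,t}m_{t-1,i}+(1-\beta_{1,t})g_{t,i}$ one gets $g_{t,i}=\frac{1}{1-\beta_{1,t}}\big(m_{t,i}-\beta_{1,t}m_{t-1,i}\big)$, and since $\beta_{1,t}=\beta_1\lambda_\beta^t\le\beta_1$ we have $\frac{1}{1-\beta_{1,t}}\le\frac{1}{1-\beta_1}$. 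The ``lag'' piece $\frac{\beta_{1,t}}{1-\beta_{1,t}}m_{t-1,i}(\theta_{t,i}-\theta^*_i)$ has absolute value at most $\frac{\beta_1\lambda_\beta^t}{1-\beta_1}G_\infty D_\infty$ (using $|m_{t-1,i}|\le G_\infty$, a convex combination of gradients bounded by $G_\infty$, and $|\theta_{t,i}-\theta^*_i|\le D_\infty$); summing this geometric series over $t$ and over the $d$ coordinates produces the term $\frac{\beta_1\lambda_\beta d}{1-\lambda_\beta}\cdot\frac{G_\infty D_\infty}{1-\beta_1}$.

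For the ``main'' piece, the key move is to pass from $m_{t,i}$ to $\mathrm{sign}(m_{t,i})$ via $|m_{t,i}|\le G_\infty$, i.e.\ $m_{t,i}(\theta_{t,i}-\theta^*_i)\le G_\infty\,\mathrm{sign}(m_{t,i})(\theta_{t,i}-\theta^*_i)$; this is where the absence of step-size adaptivity is converted into the overall $G_\infty/(1-\beta_1)$ prefactor. Writing $u_{t,i}=\mathrm{sign}(m_{t,i})+\gamma_t\,\mathrm{sign}(\theta_{t,i}-\hat\theta_{t,i})$ so that $\theta_{t+1,i}=\theta_{t,i}-\eta_t u_{t,i}$, completing the square gives
\[
u_{t,i}(\theta_{t,i}-\theta^*_i)=\frac{(\theta_{t,i}-\theta^*_i)^2-(\theta_{t+1,i}-\theta^*_i)^2}{2\eta_t}+\frac{\eta_t}{2}u_{t,i}^2 ,
\]
hence $\mathrm{sign}(m_{t,i})(\theta_{t,i}-\theta^*_i)=u_{t,i}(\theta_{t,i}-\theta^*_i)-\gamma_t\,\mathrm{sign}(\theta_{t,i}-\hat\theta_{t,i})(\theta_{t,i}-\theta^*_i)$. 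Then I would sum three contributions, each inheriting the $G_\infty/(1-\beta_1)$ factor: (i) the attraction correction, bounded by $|\gamma|\lambda_\gamma^t D_\infty$ per step and summing to $\frac{|\gamma|\lambda_\gamma d}{1-\lambda_\gamma}\cdot\frac{G_\infty D_\infty}{1-\beta_1}$ (here $\beta_2$ and $\hat\theta_t$ drop out, since only $|\mathrm{sign}(\cdot)|\le1$ and $|\theta_{t,i}-\theta^*_i|\le D_\infty$ are used); (ii) the quadratic term, with $|u_{t,i}|\le1+|\gamma_t|\le1+|\gamma|$ so $\frac{\eta_t}{2}u_{t,i}^2\le\frac{\eta_t}{2}(1+|\gamma|)^2$, and $\eta_t=\eta/\sqrt t$ with $\sum_{t=1}^T t^{-1/2}\le2\sqrt T$ giving $\frac{G_\infty\eta d\sqrt T(1+|\gamma|)^2}{1-\beta_1}$ after summing over $t$ and the $d$ coordinates; (iii) the telescoping potential $\sum_t\frac{1}{2\eta_t}\big[(\theta_{t,i}-\theta^*_i)^2-(\theta_{t+1,i}-\theta^*_i)^2\big]$, which by Abel summation (using that $1/\eta_t$ is nondecreasing and discarding the final nonpositive term) is at most $\frac{1}{2\eta}(\theta_{1,i}-\theta^*_i)^2+\frac{D_\infty^2}{2\eta_T}$ per coordinate, and hence $\frac{G_\infty(D^2+dD_\infty^2\sqrt T)}{2\eta(1-\beta_1)}$ after summing over $i$ with $\|\theta_1-\theta^*\|_2\le D$ and $1/\eta_T=\sqrt T/\eta$. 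Adding these to the lag term gives the claimed bound.

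The step I expect to require the most care is precisely the passage $m_{t,i}(\theta_{t,i}-\theta^*_i)\le G_\infty\,\mathrm{sign}(m_{t,i})(\theta_{t,i}-\theta^*_i)$: as stated it holds only on the coordinates and steps where $\mathrm{sign}(m_{t,i})(\theta_{t,i}-\theta^*_i)\ge0$, which is the algebraic signature of a pure-sign update discarding gradient magnitude, so making the argument watertight (or identifying the regularity of $\{L_t\}$ or of the iterates under which the ``bad'' coordinates are controlled) is the crux. Everything else is routine bookkeeping in the style of \cite{kingma2014adammethodstochasticoptimization}: the momentum elimination, the geometric-series bounds, the completing-the-square identity, the Abel summation, and the check that ignoring (or using non-expansiveness of) any projection onto $\mathcal{F}$ does not change the constants.
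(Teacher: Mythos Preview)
The gap you flag is real and is not resolved by extra regularity on $\{L_t\}$: the step
\[
m_{t,i}(\theta_{t,i}-\theta^*_i)\ \le\ G_\infty\,\mathrm{sign}(m_{t,i})(\theta_{t,i}-\theta^*_i)
\]
fails precisely on the coordinates and steps where $\mathrm{sign}(m_{t,i})(\theta_{t,i}-\theta^*_i)<0$, and nothing in the setup prevents that. The paper does not try to repair this inequality; it sidesteps it by a different order of operations. Instead of first passing from $g_{t,i}$ to $m_{t,i}$ and then trying to reach $\mathrm{sign}(m_{t,i})$, the paper starts from the squared-distance recursion for the update, multiplies the whole identity by $|m_{t,i}|$, and uses $|m_{t,i}|\,\mathrm{sign}(m_{t,i})=m_{t,i}$ so that the cross term becomes exactly $-2\eta_t(\theta_{t,i}-\theta^*_i)m_{t,i}$. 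Substituting the momentum recursion $m_{t,i}=\beta_{1,t}m_{t-1,i}+(1-\beta_{1,t})g_{t,i}$ at that point yields an \emph{exact} identity
\[
g_{t,i}(\theta_{t,i}-\theta^*_i)
=\frac{|m_{t,i}|}{2\eta_t(1-\beta_{1,t})}\big[(\theta_{t,i}-\theta^*_i)^2-(\theta_{t+1,i}-\theta^*_i)^2\big]
-\frac{\beta_{1,t}m_{t-1,i}}{1-\beta_{1,t}}(\theta_{t,i}-\theta^*_i)
-\frac{|m_{t,i}|\gamma_t}{1-\beta_{1,t}}(\theta_{t,i}-\theta^*_i)\,\mathrm{sign}(\theta_{t,i}-\hat\theta_{t,i})
+\frac{|m_{t,i}|\eta_t}{2(1-\beta_{1,t})}u_{t,i}^2,
\]
with no inequality used yet. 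Only afterwards is $|m_{t,i}|\le G_\infty$ invoked, term by term, together with the same Abel summation and geometric-series bounds you already described.

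So your outline is essentially right except at the one place you worried about; the fix is to reverse the order---complete the square first, multiply by $|m_{t,i}|$, and only then unpack the momentum---rather than to look for conditions under which your inequality would hold. Everything downstream (the lag term, the attraction term, the $\sum t^{-1/2}\le 2\sqrt T$ bound, the telescope with the $D^2$ and $dD_\infty^2\sqrt{T}$ pieces) matches the paper's bookkeeping.
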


Theoretically, decaying $\beta_{1,t}$ and $\gamma_t$ are important to bound some summation terms in $R(T)$. However, those summation terms may be small themselves and do not need decaying $\beta_{1,t}$ or $\gamma_t$. The worst-case analysis does not reflect the advantage of introducing momentum or attraction and has limited insights into designing new algorithms. On the other hand, it is still interesting to study the effect of decaying force. For a sanity check, we conclude that FOCUS can achieve the same convergence rate as Adam on the same online convex optimization problem.

\section{Related works}\label{sec:relate}

{\bf Valley-like loss landscapes} Although strongly convex loss landscapes are easier to analyze, empirical evidence has shown that loss landscapes are usually valley-like: \citet{sagun2016eigenvalues} found that eigenvalues of Hessians are split into the bulk part concentrated around zero and the edge part away from zero. The large eigenvalues correspond to directions with fast loss changes, while the near-zero eigenvalues correspond to the relatively flat regions at the bottom of the valley. This valley picture also agrees with the observation that gradient descent happens mostly in a tiny subspace~\cite{gur2018gradient}. Like our paper, research has looked into the role of noise in valley-like landscapes~\cite{wei2019noise}, suggesting that the noise drifts the parameter towards a less sharp landscape. Although noise in SGD was deemed beneficial in earlier works~\cite{kleinberg2018alternative,chaudhari2019entropy}, since they can help escape local minima and find flatter (hence more generalizable) solutions, recent work in large language models suggests that noise is the enemy. \citet{wen2024understanding} recently revisits the valley loss landscape in the context of language models, suggesting that noise or large learning rates prevent the optimizer from getting to the bottom of the valley. The main idea of FOCUS agrees with this picture: a good optimizer for LLMs should be able to squeeze into a sharp valley.

{\bf Optimizers} Despite the family of adaptive optimizers (Adam, Adagrad) dominating the deep learning world, there are newly proposed optimizers that are shown to demonstrate superior performance for language model pretraining. These methods include Lion~\cite{chen2024symbolic}, AdEMAMix~\cite{pagliardini2024ademamix}, MARS~\cite{yuan2024mars}, cautious optimizers~\cite{liang2024cautious}, modular optimizers~\cite{large2024scalable}, Sophia~\cite{liu2023sophia}, soap~\cite{vyas2024soap} (combining Adam with Shampoo~\cite{gupta2018shampoo}), muon~\cite{jordan2024muon}, Adam-mini~\cite{zhang2024adamminiusefewerlearning}, etc. These optimizers are mostly mathematics-inspired, while FOCUS is physics-inspired (self-attracting gas). Our analysis also involves physical tools like phase diagrams which illuminate when Adam/Signum is more performant than the other, providing insights into the question of why Signum works at all~\cite{bernstein2018signsgd,large2024scalable}, and that Adam is not always better than Signum~\cite{zhao2024deconstructing}.

{\bf Weight averaging} The EMA of a training trajectory has been used to obtain model parameters with better prediction stability and generalizability \cite{grill2020bootstraplatentnewapproach,He_2020_CVPR,Brotons2024exponential}. However, we use the EMA of parameters to modify the optimizer and change training dynamics directly.

\section{Discussion}\label{sec:disscuss}

We discuss our physics picture \cref{fig:outline}b in physics language. For a stochastic optimization process, the optimizer should end up being at the minimum of free energy
\begin{align}
    F = E -\Theta S,
\end{align}
where energy $E$ is the loss value, $\Theta$ is temperature related to batch size, learning rate, etc., and $S$ is entropy. Squeezing in the narrowing valley, the effective volume of the ``bouncing particles" is decreasing so that $S$ is decaying which tends to increase $F$. At some point, when the energetic force (i.e., gradient) balances the entropic force, the optimizer is stuck. We introduce attraction force which effectively adds a term to energy that prefers small volumes, fighting against the decaying entropy.

The mechanistic reasons behind LLM having sharp valley landscapes and large gradient stochasticity may be the same--large uncertainty in data distribution \cite{wen2024understanding}. Language is special in having such a large variability, diversity, and richness, giving rise to unique training dynamics. FOCUS outperforms Adam in our GPT-2 training (\cref{sec:exp}) because the batch size may still be small given the language task and dataset. Yet, how large the batch size we need for LLM to have a small gradient noise is an open question. 

The fact that FOCUS can be faster indicates large gradient noise is an important limiting factor in training LLMs. According to FOCUS's ability to deal with large noises and instabilities, we suggest trying it with limited batch sizes, low-precision training, etc. Future research is needed to fully explore the utility of FOCUS and our physical picture.

\section*{Acknowledgements}
Y. L. thanks Tongyang Li for helpful suggestions.


\bibliography{ref}
\bibliographystyle{icml2025}

\newpage
\appendix
\onecolumn

\section{Model details}\label{app:models}

\subsection{Toy model}\label{app:toy}

The experiment utilizes a two-dimensional optimization landscape defined by the function \cref{eq:toyloss}
where $u$ represents the coordinates after rotation by $\theta = \pi/6$ radians (30 degrees). The landscape parameters are set to $a=10$ and $c=0.1$. The rotation is implemented using a standard 2D rotation matrix:
\begin{equation}
    \begin{pmatrix} 
    \cos(\theta) & -\sin(\theta) \\
    \sin(\theta) & \cos(\theta)
    \end{pmatrix}
\end{equation}

The experiment compares three optimization configurations:
\begin{enumerate}
    \item Adam optimizer with $\beta_1=0.9$, $\beta_2=0.999$
    \item FOCUS optimizer with $\beta_1=0.9$, $\beta_2=0.9$, $\gamma=0.2$
    \item Signum optimizer with $\beta_1=0.9$
\end{enumerate}

The experiment shown in \cref{fig:toy}a systematically explores the following parameter ranges:
\begin{itemize}
\item Landscape sharpness ($a$): 96 values logarithmically spaced between $10^{-1}$ and $10^3$
\item Noise standard deviation: 100 values linearly spaced between 0 and 3
\item Learning rates: 20 values logarithmically spaced between $10^{-3}$ and $10^0$
\end{itemize}
Each configuration was tested with 50 independent replicates. For each replicate:
\begin{itemize}
\item Initial positions were randomly initialized with standard deviation $10^{-4}$
\item Optimization proceeded for 1000 steps
\item Weight decay was fixed at 0.1 across all experiments
\item The final loss value was recorded for analysis
\end{itemize}
The implementation includes several important technical considerations:
\begin{itemize}
\item Gradient computation incorporates multiplicative noise with a controlled standard deviation
\item Adam uses $\epsilon=10^{-8}$ for numerical stability in its denominator
\item Momentum-based bias correction is applied for both optimizers
\end{itemize}
Results are stored in a 6-dimensional array with the following dimensions:
\begin{itemize}
\item 3 (optimizer configurations)
\item 100 (noise levels)
\item 96 (sharpness)
\item 20 (learning rates)
\item 50 (replicates)
\item 1 (metrics: final loss)
\end{itemize}
Results for each landscape sharpness configuration are saved as separate numpy arrays, with filenames indicating the corresponding task ID in the format \texttt{syn-exp-7-\{task\_id\}.npy}.

In \cref{fig:toy}b, we do the following to obtain the $96$ pixel $\times$ $100$ pixel phase diagram. 
For each optimizer configuration, the following parameters were systematically varied:
\begin{itemize}
    \item Learning rates: 20 values logarithmically spaced between $10^{-3}$ and $10^0$
    \item Noise standard deviation: 100 values linearly spaced between 0 and 3
    \item Weight decay: 96 values linearly spaced between 0 and 0.5
\end{itemize}
Each configuration was tested with 50 independent replicates. For each replicate:
\begin{itemize}
    \item Initial positions were randomly initialized with standard deviation $10^{-4}$
    \item Optimization proceeded for 1000 steps
    \item Three metrics were tracked:
    \begin{enumerate}
        \item Final loss value
        \item Historical minimum loss achieved
        \item Projection onto the valley axis (defined by rotation angle $\pi/6$)
    \end{enumerate}
\end{itemize}
The implementation includes several important technical considerations:
\begin{itemize}
    \item Gradient computation incorporates multiplicative noise with a controlled standard deviation
    \item Weight decay is implemented as an additive term in parameter updates
    \item Adam uses $\epsilon=10^{-8}$ for numerical stability in its denominator
    \item Momentum-based bias correction is applied for both optimizers
\end{itemize}
The final results can be put in a 6-dimensional array with the following dimensions:
\begin{itemize}
    \item 3 (optimizer configurations)
    \item 100 (noise levels $\sigma$)
    \item 96 (weight decay)
    \item 20 (learning rates)
    \item 50 (replicates)
    \item 3 (metrics: final loss, minimum loss along the trajectory, valley projection)
\end{itemize}
Results for each weight decay configuration are saved as separate numpy arrays, with filenames indicating the corresponding task ID in the format \texttt{syn-exp-6-\{task\_id\}.npy}.

\subsection{MLP}\label{app:mlp}
The experiment in \cref{fig:toy} c and d employs a multi-layer perceptron (MLP) with the following architecture:
\begin{itemize}
    \item Input dimension: 784 (flattened MNIST images)
    \item Five hidden layers of 128 units each
    \item Output dimension: 10 (number of MNIST classes)
    \item ReLU activation functions between layers
    \item Batch normalization applied before each linear layer
\end{itemize}
The MNIST dataset was used with the following specifications:
\begin{itemize}
    \item Images resized to 28$\times$28 pixels
    \item Pixel values normalized to [0,1] through ToTensor transformation
    \item Training set: 60,000 images
    \item Test set: 10,000 images
    \item Test batch size fixed at 8,192 samples
\end{itemize}
The experiment compares four optimization configurations:
\begin{enumerate}
    \item AdamW with standard parameters ($\beta_1=0.9$, $\beta_2=0.999$)
    \item Signum with $\beta_1=0.9$
    \item FOCUS with $\beta_1=0.9$, $\beta_2=0.99$, $\gamma=0.2$
    \item FOCUS with $\beta_1=0.9$, $\beta_2=0.99$, $\gamma=0.4$
\end{enumerate}
The experiment systematically explores the following parameter ranges:
\begin{itemize}
    \item Learning rates: 20 values logarithmically spaced between $10^{-4}$ and $10^0$
    \item Batch sizes: 12 values as powers of 2, ranging from $2^1$ to $2^{12}$
    \item Weight decay: Fixed at $10^{-2}$ for all experiments
\end{itemize}
Each configuration was tested with three independent replicates. For each replicate:
\begin{itemize}
    \item Training proceeded for 400 steps
    \item Cross-entropy loss was used as the optimization objective
    \item Weight decay was selectively applied (excluded for bias and batch normalization parameters)
    \item Training loss, test loss, and test accuracy were recorded at each step
\end{itemize}
Several technical considerations were incorporated:
\begin{itemize}
    \item Gradient computations used PyTorch's autograd system
    \item Data loading utilized pin\_memory for improved GPU transfer efficiency
    \item Parameters were separated into two groups for differential weight decay application
    \item Test evaluation was performed with torch.no\_grad() for memory efficiency
\end{itemize}
All results are stored in a 5-dimensional tensor with the following dimensions:
\begin{itemize}
    \item 12 (batch sizes)
    \item 20 (learning rates)
    \item 3 (replicates)
    \item 400 (training steps)
    \item 3 (metrics: training loss, test loss, test accuracy)
\end{itemize}
Results for each combination of optimizer type and batch size are saved as separate PyTorch tensors, with filenames indicating the corresponding task ID in the format \texttt{scan-18-\{task\_id\}.pt}.

\subsection{GPT-2}\label{app:gpt}
We implement the GPT-2 small architecture with 125M parameters as described in \citet{radford2019language} which has embedding dimension $768$, number of heads $12$, and number of layers $12$.
We use \citet{liu2023sophia} codebase as our implementation foundation which is based on nanoGPT (\url{https://github.com/karpathy/nanoGPT/}). Following nanoGPT's architecture choices, we use GELU activations and disable bias and Dropout during pre-training.

Our model is trained on OpenWebText \citep{gokaslan2019openwebtext}. The text is tokenized using the standard GPT-2 tokenizer \citep{radford2019language}. We utilize the train and validation split provided by nanoGPT, where the training set contains 9B tokens and the validation set contains 4.4M tokens.

To enable efficient training, we implement distributed data parallel (DDP) training across 8 NVIDIA V100 GPUs with gradient accumulation to maintain an effective batch size of 480 ($480=12\times 5\times 8$, where $12$ is the batch size on each GPU, $5$ is the number of gradient accumulation steps, and $8$ is the number of GPUs). The entire training process is conducted using \texttt{float16} precision to optimize memory usage and computational efficiency.

\cref{fig:exp}a compares three optimization configurations:
\begin{enumerate}
    \item Signum with learning rate $6\times 10^{-4}$, $\beta_1=0.9$, weight decay $0.2$, $10^5$ training steps
    \item Signum with learning rate $10^{-4}$, $\beta_1=0.9$, weight decay $0.2$, $10^5$ training steps
    \item FOCUS with learning rate $6\times 10^{-4}$, $\beta_1=0.9$, $\beta_2=0.99$, $\gamma=0.2$, weight decay $0.2$, $10^5$ training steps
\end{enumerate}

\cref{fig:exp}b compares four optimization configurations:
\begin{enumerate}
    \item Adam with learning rate $6\times 10^{-4}$, $\beta_1=0.9$, $\beta_2=0.95$, weight decay $0.1$, $10^5$ training steps
    \item Adam with learning rate $3\times 10^{-4}$, $\beta_1=0.9$, $\beta_2=0.95$, weight decay $0.1$, $10^5$ training steps
    \item Adam with learning rate $10^{-4}$, $\beta_1=0.9$, $\beta_2=0.95$, weight decay $0.1$, $10^5$ training steps
    \item FOCUS with learning rate $6\times 10^{-4}$, $\beta_1=0.9$, $\beta_2=0.99$, $\gamma=0.2$, weight decay $0.2$, $10^5$ training steps
\end{enumerate}

\cref{fig:exp}c compares three optimization configurations:
\begin{enumerate}
    \item Adam with learning rate $6\times 10^{-4}$, $\beta_1=0.9$, $\beta_2=0.95$, weight decay $0.1$, $10^5$ training steps (result copied from \citet{liu2023sophia})
    \item Adam with learning rate $10^{-4}$, $\beta_1=0.9$, $\beta_2=0.95$, weight decay $0.1$, $10^5$ training steps
    \item FOCUS with learning rate $6\times 10^{-4}$, $\beta_1=0.9$, $\beta_2=0.99$, $\gamma=0.2$, weight decay $0.2$, $10^5$ training steps
\end{enumerate}

\cref{fig:exp}d compares four optimization configurations:
\begin{enumerate}
    \item Adam with learning rate $6\times 10^{-4}$, $\beta_1=0.9$, $\beta_2=0.95$, weight decay $0.1$, $10^5$ training steps (result copied from \citet{liu2023sophia})
    \item FOCUS with learning rate $6\times 10^{-4}$, $\beta_1=0.9$, $\beta_2=0.99$, $\gamma=0.2$, weight decay $0.2$, $10^5$ training steps
    \item FOCUS with learning rate $6\times 10^{-4}$, $\beta_1=0.9$, $\beta_2=0.99$, $\gamma=0.2$, weight decay $0.2$, $5\times 10^4$ training steps
    \item FOCUS with learning rate $10^{-3}$, $\beta_1=0.9$, $\beta_2=0.99$, $\gamma=0.2$, weight decay $0.2$, $2\times 10^4$ training steps
\end{enumerate}

\section{Supplementary results}\label{app:result}
\begin{figure*}
\vskip 0.2in
\begin{center}
\centerline{\includegraphics{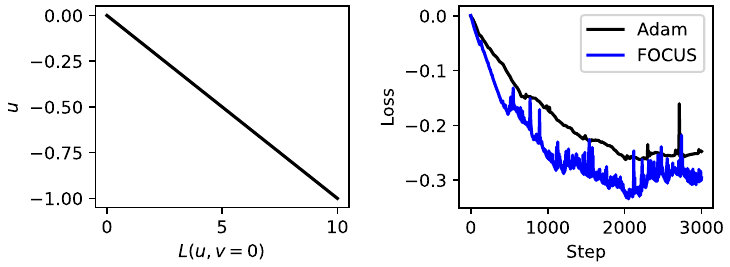}}
\caption{Training dynamics reaches steady state on the narrowing valley. The left panel is $L(u,v=0)=-cu$ showing our toy landscape can go to negative infinity ($c=0.1$ in this case). The right panel is the loss for Adam (learning rate $0.005$, $\beta_1=0.9$, $\beta_2=0.999$, weight decay $0.1$) and FOCUS (learning rate $0.005$, $\beta_1=0.9$, $\beta_2=0.9$, $\gamma=0.2$, weight decay $0.1$) on the toy landscape with $a=10$ and $c=0.1$. Both training dynamics already converge within $3000$ steps.}
\label{fig:satuation}
\end{center}
\vskip -0.2in
\end{figure*}

The landscape we construct can go to negative infinity (see \cref{fig:satuation} right panel) which is unrealistic. Our construction serves as a good local approximation. Despite the unrealistic part, we emphasize the key takeaway that because the valley is narrowing, the positions that can be reached with finite learning rates are far from the true minimum ($-\infty$ in our construction but $0$ in real cases). We also discuss this phenomenon through the balance between the gradient (energetic term) and an ``entropic" term associated with finite learning rate and gradient stochasticity (see \cref{sec:disscuss}). In practice, with non-zero weight decay, the optimizer can go along one parameter bounded by the inverse weight decay. However, the true parameter values the optimizers stay at (in \cref{fig:satuation} right panel, parameters are around $2\sim3$) are much smaller than the inverse weight decay (which is $10$ in \cref{fig:satuation}, the right panel). If the narrowing valley picture is true, we meet a problem for all optimizers similar to Zeno's paradox: we need a smaller learning rate to go deeper into the valley, but a smaller learning rate reduces the speed moving along the valley.
The narrowing valley makes the optimizer stay at a place far from the true optimum in the valley, which provides another perspective for the neural scaling laws \cite{kaplan2020scaling,hoffmann2022chinchilla}: the scaling laws may be due to training dynamics rather than the limit of the model.

In \cref{fig:pairs}, we provide detailed pair comparisons between optimizers based on the same data as \cref{fig:toy} a and b, which shows more details. As stated in the main text, the checkered pattern between blue and yellow may suggest that FOCUS does not reach significantly deeper in the valley but may be more stable such that with high probability we can see blue in the mixed region. We tend to interpret this checkered pattern through stochasticity rather than sensitivity to hyperparameters in deterministic training setup \cite{sohldickstein2024boundaryneuralnetworktrainability,liu2024complexfractaltrainabilityboundary}.  We also tune and generalize our toy landscapes which gives the same qualitative results (\cref{fig:moretoy}). For GPT-2 training, we find that increasing weight decay does not suffice to stabilize Adam (\cref{fig:weightd}), and therefore we decrease the learning rate in the main text to try to obtain stable training on our machines.
We also find that FOCUS needs a longer time to finish the same number of steps while the extra time is negligibly small (\cref{fig:walltime}). Since in terms of the algorithms, FOCUS is not more complex than Adam nor requires more memory, we hypothesize that this extra time is due to our unsophisticated code implementation of FOCUS which has some room for improvement.

\begin{figure*}
\vskip 0.2in
\begin{center}
\centerline{\includegraphics{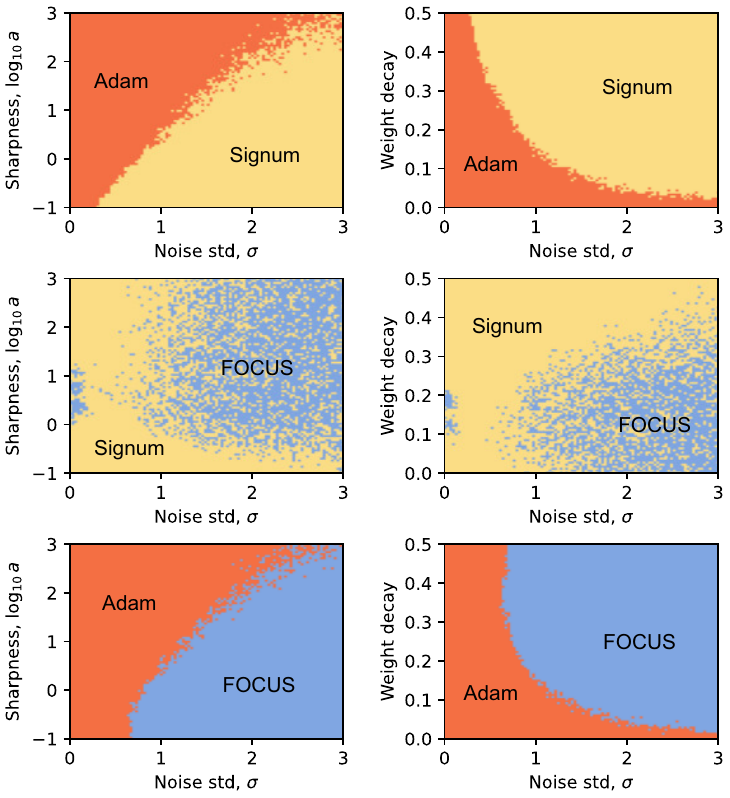}}
\caption{Pairwise comparisons between optimizers. This figure uses the same data as \cref{fig:toy}, a and b. We plot all the pairwise comparisons, showing more details to support the conclusion.}
\label{fig:pairs}
\end{center}
\vskip -0.2in
\end{figure*}

\begin{figure*}[ht]
\vskip 0.2in
\begin{center}
\centerline{\includegraphics{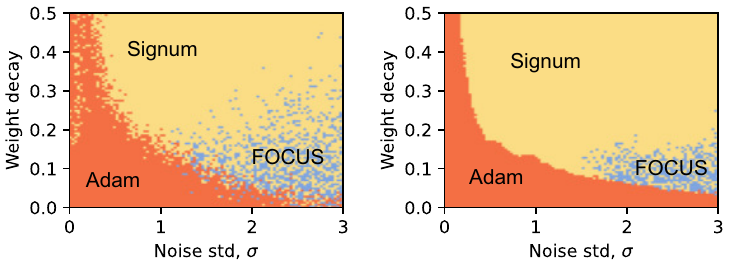}}
\caption{Toy landscape can have various forms and our results hold qualitatively and robustly. The left panel is obtained following the same procedure as \cref{fig:toy}b described in \cref{app:toy} but with $u$ being $x$ (i.e., no rotation). To obtain the right panel, we generalize the toy landscape to be high-dimensional ($100$ dimensions). The idea is to expand $v$ into a vector and the resulting landscape has axial symmetry along $u$. The $u,v$ coordinate is obtained by a random rotation from the parameter directions first.}
\label{fig:moretoy}
\end{center}
\vskip -0.2in
\end{figure*}

\begin{figure*}[ht]
\vskip 0.2in
\begin{center}
\centerline{\includegraphics{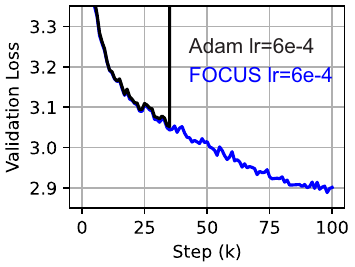}}
\caption{Increasing the weight decay cannot stabilize Adam. The black line here is Adam with learning rate $6\times 10^{-4}$, $\beta_1=0.9$, $\beta_2=0.95$, weight decay $0.2$, and $10^5$ training steps. The blue line is the same as \cref{fig:exp}.}
\label{fig:weightd}
\end{center}
\vskip -0.2in
\end{figure*}

\begin{figure*}[ht]
\vskip 0.2in
\begin{center}
\centerline{\includegraphics{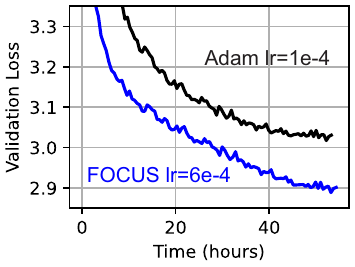}}
\caption{Adam and FOCUS have little difference in actual running time. We plot \cref{fig:exp}c in terms of wall time rather than steps and find FOCUS needs a negligibly longer time to run the same $10^5$ steps.}
\label{fig:walltime}
\end{center}
\vskip -0.2in
\end{figure*}

\section{Convergence proof}\label{app:theory}

\begin{lemma}[Convex function]
    \label{lem:convex}
    If a function $f$ over $\mathbb{R}^d$ is convex, then for all $x,y \in \mathbb{R}^d$,
    \begin{align}
        f(y) \geq f(x) + \langle \nabla f(x), y - x \rangle.
    \end{align}
\end{lemma}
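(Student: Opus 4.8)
The plan is to derive the first-order inequality directly from the definition of convexity by forming a difference quotient and passing to the limit. Recall that $f$ being convex means that for all $x, y \in \mathbb{R}^d$ and all $\lambda \in [0,1]$,
\begin{align}
f\big((1-\lambda)x + \lambda y\big) \leq (1-\lambda)f(x) + \lambda f(y).
\end{align}
First I would rewrite the argument on the left as $x + \lambda(y-x)$ and move the $f(x)$ terms on the right together, obtaining $f(x + \lambda(y-x)) - f(x) \leq \lambda\big[f(y) - f(x)\big]$ for every $\lambda \in (0,1]$.

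Next, since $\lambda > 0$, I would divide both sides by $\lambda$ to expose the forward difference quotient of $f$ at $x$ along the direction $y-x$:
\begin{align}
\frac{f(x + \lambda(y-x)) - f(x)}{\lambda} \leq f(y) - f(x).
\end{align}
Now I take $\lambda \to 0^{+}$. Because $f$ is differentiable (implicit in the statement, since $\nabla f(x)$ appears), the directional derivative exists and the left-hand side converges to $\langle \nabla f(x), y - x\rangle$. The right-hand side is independent of $\lambda$, so the bound is preserved in the limit, yielding $\langle \nabla f(x), y - x\rangle \leq f(y) - f(x)$, which rearranges to exactly the claimed inequality $f(y) \geq f(x) + \langle \nabla f(x), y - x\rangle$.

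The only delicate point is justifying that the inequality survives the limit. This follows from the elementary fact that if a convergent quantity is bounded above by a fixed constant for all $\lambda \in (0,1]$, then its limit is bounded by that same constant; no monotonicity is strictly required. For a cleaner alternative I could note that convexity makes the difference quotient nondecreasing in $\lambda$, so the infimum over $\lambda$ equals the limit and the bound is immediate. Either way, differentiability of $f$ is the only hypothesis beyond convexity that I rely on, so this completes the proof of \cref{lem:convex}.
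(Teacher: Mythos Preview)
Your argument is correct and is the standard textbook derivation of the first-order convexity inequality. The paper does not actually supply a proof of \cref{lem:convex}; it simply states that this is ``a standard property of convex functions,'' so your proposal provides exactly the details the paper omits and there is nothing further to compare.
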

The above lemma is a standard property of convex functions.

\begin{lemma}[Momentum bound]
    \label{lem:m}
    Let $m_{t,i}$ be the $i$th element of momentum vector $m_t$, $g_t$ is the gradient obtained at step $t$ satisfying $\| g_t\|_\infty \leq G_\infty$, and $m_t = \beta_{1,t} m_{t-1} + (1-\beta_{1,t}) g_t$ with $\beta_{1,t} \in [0,1)$, we have $|m_{t,i}|\leq G_\infty$ for all $t$ and $i$.
\end{lemma}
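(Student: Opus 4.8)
The plan is a straightforward induction on $t$, exploiting that the momentum update is a convex combination of the previous momentum and the current gradient. For the base case, the algorithm initializes $m_0 = 0$, so every coordinate trivially satisfies $|m_{0,i}| = 0 \le G_\infty$. For the inductive step, suppose $|m_{t-1,i}| \le G_\infty$ for all $i$. From the recursion $m_{t,i} = \beta_{1,t} m_{t-1,i} + (1-\beta_{1,t}) g_{t,i}$ and the triangle inequality, $|m_{t,i}| \le \beta_{1,t} |m_{t-1,i}| + (1-\beta_{1,t}) |g_{t,i}|$; applying the inductive hypothesis to the first term and $|g_{t,i}| \le \|g_t\|_\infty \le G_\infty$ to the second, the right-hand side is at most $\beta_{1,t} G_\infty + (1-\beta_{1,t}) G_\infty = G_\infty$, which closes the induction.

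The only point requiring any care — and it is the crux, such as it is — is that both coefficients $\beta_{1,t}$ and $1-\beta_{1,t}$ are nonnegative, which is guaranteed precisely by the hypothesis $\beta_{1,t} \in [0,1)$; this is what lets us move the absolute value inside and bound a convex combination of two quantities each bounded by $G_\infty$ by $G_\infty$ itself. An equivalent but more pedestrian route would be to unroll the recursion as $m_t = \sum_{s=1}^{t} \big(\prod_{r=s+1}^{t}\beta_{1,r}\big)(1-\beta_{1,s})\, g_s$ and observe that the nonnegative weights telescope to $1 - \prod_{r=1}^{t}\beta_{1,r} \le 1$, so $\|m_t\|_\infty$ is bounded by a subconvex combination of the $\|g_s\|_\infty \le G_\infty$; the inductive argument, however, is cleaner and avoids the bookkeeping.

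I do not anticipate any genuine obstacle: the lemma is a routine building block whose purpose is to control the per-step displacement $\eta_t(\mathrm{sign}(m_t) + \gamma\,\mathrm{sign}(\theta_t - \hat\theta_t))$ and to feed the telescoping and Cauchy–Schwarz estimates in the proof of \cref{thm:main}. If anything, the write-up should simply be explicit that the bound holds uniformly in $t$ and coordinatewise, so that it can be invoked both as $|m_{t,i}| \le G_\infty$ and, after summing over the $d$ coordinates, as $\|m_t\|_1 \le d\,G_\infty$.
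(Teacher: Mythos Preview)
Your proposal is correct and matches the paper's proof essentially line for line: both argue by induction on $t$ with base case $m_0=0$ and inductive step via the triangle inequality applied to the convex combination $\beta_{1,t} m_{t-1,i} + (1-\beta_{1,t}) g_{t,i}$. Your additional remarks (the role of nonnegativity of the coefficients, and the alternative unrolled-sum viewpoint) are fine commentary but not needed for the argument.
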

\begin{proof} 
First of all, $|m_0| = 0 \leq G_\infty$. We next assume that $|m_{t-1,i}|\leq G_\infty$ for all $i$. Then, $|m_{t,i}| = |\beta_{1,t} m_{t-1,i} + (1-\beta_{1,t}) g_{t,i}|\leq |\beta_{1,t} m_{t-1,i}| + |(1-\beta_{1,t}) g_{t,i}|\leq \beta_{1,t}G_\infty +(1-\beta_{1,t}) G_\infty = G_\infty$. According to mathematical induction, the proof is completed.
\end{proof}

\begin{theorem}[Regret Bound]
Let $\{L_t\}_{t=1}^T$ be a sequence of convex functions with bounded gradients $\|g_t\|_\infty \leq G_\infty$. For the FOCUS optimizer with learning rate $\eta_t = \eta/\sqrt{t}$, $\beta_1, \beta_2 \in [0,1)$, $\beta_{1,t} = \beta_1 \lambda_\beta^t$, $\gamma_t = \gamma \lambda_\gamma^t$, and $\lambda_\beta, \lambda_\gamma\in [0,1)$, assuming the distance between any two points in the feasible convex region $\mathcal{F}$ are bounded in infinity norm by $D_\infty$ and in 2-norm by $D$, the regret is bounded by:
\begin{align}
    R(T) \leq \frac{G_\infty(D^2 + d D_\infty^2\sqrt{T})}{2\eta(1-\beta_1)} 
    + \Big(\frac{\beta_{1}\lambda_\beta d}{1-\lambda_\beta}
    + \frac{|\gamma| \lambda_\gamma d}{1-\lambda_\gamma}\Big)\frac{G_\infty D_\infty}{1-\beta_{1}}
    + \frac{G_\infty \eta d \sqrt{T}}{(1-\beta_{1})}(1+|\gamma|)^2.
    \label{eq:rbound}
\end{align}
\end{theorem}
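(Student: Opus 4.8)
The plan is to follow the classical $O(\sqrt{T})$ regret analysis of Adam, reading the Signum step $-\eta_t\,\mathrm{sign}(m_t)$ as a diagonally preconditioned momentum step in which the per-coordinate preconditioner $|m_{t,i}|$ plays the role of Adam's $\sqrt{\hat v_{t,i}}$, and treating the attraction term $-\eta_t\gamma_t\,\mathrm{sign}(\theta_t-\hat\theta_t)$ as a bounded perturbation. I take $\omega=0$, since weight decay does not appear in the bound, and I assume the standard Euclidean projection onto $\mathcal{F}$ after each update (which, for $\theta^*\in\mathcal{F}$, only decreases $\|\theta_{t+1}-\theta^*\|$).

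First, convexity (Lemma \ref{lem:convex}) gives $R(T)\le\sum_{t=1}^T\langle g_t,\theta_t-\theta^*\rangle$. I eliminate $g_t$ using $g_t=(m_t-\beta_{1,t}m_{t-1})/(1-\beta_{1,t})$, so that $\langle g_t,\theta_t-\theta^*\rangle=\frac{1}{1-\beta_{1,t}}\langle m_t,\theta_t-\theta^*\rangle-\frac{\beta_{1,t}}{1-\beta_{1,t}}\langle m_{t-1},\theta_t-\theta^*\rangle$. The last term is at most $\frac{\beta_{1,t}}{1-\beta_{1,t}}\sum_i|m_{t-1,i}|\,|\theta_{t,i}-\theta^*_i|\le\frac{\beta_{1,t}}{1-\beta_{1,t}}\,dG_\infty D_\infty$ by Lemma \ref{lem:m} and the diameter bound; since $\beta_{1,t}=\beta_1\lambda_\beta^t$ and $1-\beta_{1,t}\ge1-\beta_1$, summing $\sum_t\lambda_\beta^t\le\lambda_\beta/(1-\lambda_\beta)$ contributes $\frac{\beta_1\lambda_\beta d}{1-\lambda_\beta}\cdot\frac{G_\infty D_\infty}{1-\beta_1}$.

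For the leading term I work one coordinate at a time (taking $\mathcal{F}$ a box so the projection acts coordinatewise; general $\mathcal{F}$ is handled as in the Adam analysis). When $m_{t,i}\neq0$, using $\mathrm{sign}(m_{t,i})=m_{t,i}/|m_{t,i}|$ and non-expansiveness, $(\theta_{t+1,i}-\theta^*_i)^2\le\big(\theta_{t,i}-\theta^*_i-\tfrac{\eta_t}{|m_{t,i}|}m_{t,i}-\eta_t\gamma_t\,\mathrm{sign}(\theta_{t,i}-\hat\theta_{t,i})\big)^2$; expanding and multiplying through by $|m_{t,i}|/\eta_t$ yields
\[
m_{t,i}(\theta_{t,i}-\theta^*_i)\le\frac{|m_{t,i}|}{2\eta_t}\big[(\theta_{t,i}-\theta^*_i)^2-(\theta_{t+1,i}-\theta^*_i)^2\big]+G_\infty|\gamma_t|D_\infty+\frac{\eta_t G_\infty}{2}(1+|\gamma|)^2,
\]
where I used $|m_{t,i}|\le G_\infty$, the diameter bound, and $\big(\mathrm{sign}(m_{t,i})+\gamma_t\,\mathrm{sign}(\theta_{t,i}-\hat\theta_{t,i})\big)^2\le(1+|\gamma|)^2$; coordinates with $m_{t,i}=0$ contribute nothing from the Signum part. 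Carrying the factor $\frac{1}{1-\beta_{1,t}}\le\frac{1}{1-\beta_1}$ from the previous step and summing over $i$ and $t$: the $G_\infty|\gamma_t|D_\infty$ term (with $|\gamma_t|=|\gamma|\lambda_\gamma^t$, $\sum_t\lambda_\gamma^t\le\lambda_\gamma/(1-\lambda_\gamma)$) gives $\frac{|\gamma|\lambda_\gamma d}{1-\lambda_\gamma}\cdot\frac{G_\infty D_\infty}{1-\beta_1}$, and the step term (with $\sum_t\eta_t=\eta\sum_t t^{-1/2}\le2\eta\sqrt{T}$) gives $\frac{G_\infty\eta d\sqrt{T}}{1-\beta_1}(1+|\gamma|)^2$.

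There remains $\sum_t\frac{|m_{t,i}|}{2\eta_t(1-\beta_{1,t})}\big[(\theta_{t,i}-\theta^*_i)^2-(\theta_{t+1,i}-\theta^*_i)^2\big]$. I bound $\frac{|m_{t,i}|}{1-\beta_{1,t}}\le\frac{G_\infty}{1-\beta_1}$ (Lemma \ref{lem:m}) and Abel-sum against the deterministic increasing sequence $1/\eta_t=\sqrt{t}/\eta$, so the per-coordinate sum is at most $\frac{G_\infty}{2\eta(1-\beta_1)}\big[(\theta_{1,i}-\theta^*_i)^2+D_\infty^2(\sqrt{T}-1)\big]$; summing over $i$ with $\sum_i(\theta_{1,i}-\theta^*_i)^2=\|\theta_1-\theta^*\|^2\le D^2$ and $\sum_i D_\infty^2=dD_\infty^2$ gives $\frac{G_\infty(D^2+dD_\infty^2\sqrt{T})}{2\eta(1-\beta_1)}$, and adding the three earlier contributions reproduces the stated bound. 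I expect this final telescoping step to be the crux: the weight $|m_{t,i}|/\eta_t$ is not monotone in $t$, which is exactly the familiar delicacy in the Adam regret proof, and the remedy is to replace $|m_{t,i}|$ by the uniform bound $G_\infty$ (Lemma \ref{lem:m}) before telescoping, leaving only the deterministic increasing factor $\sqrt{t}$. A secondary technicality is making the coordinatewise argument compatible with the projection for a general feasible set, handled as in the original Adam analysis.
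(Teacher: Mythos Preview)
Your plan is essentially the paper's proof: convexity reduces the regret to $\sum_t\langle g_t,\theta_t-\theta^*\rangle$, the squared-distance expansion of the FOCUS step isolates the gradient--displacement product, and the same four pieces (telescoping, $\beta_{1,t}$-momentum, $\gamma_t$-attraction, step-size) are bounded exactly as you describe. The only cosmetic difference is order of substitution---you split via $g_t=(m_t-\beta_{1,t}m_{t-1})/(1-\beta_{1,t})$ first, the paper expands the square first---yielding identical terms.

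On the telescoping ``crux'', however, your stated remedy does not work as written. Replacing $|m_{t,i}|$ by its upper bound $G_\infty$ \emph{before} Abel-summing is not a valid inequality, because the bracket $[(\theta_{t,i}-\theta^*_i)^2-(\theta_{t+1,i}-\theta^*_i)^2]$ can be negative, and enlarging the coefficient of a negative term decreases the expression. The paper handles this step differently: it Abel-sums with the original weight $|m_{t,i}|/\eta_t$ intact and only afterwards bounds $(\theta_{t,i}-\theta^*_i)^2\le D_\infty^2$ and $|m_{T,i}|/\eta_T\le G_\infty\sqrt{T}/\eta$. That move has the same defect in the opposite place (the Abel increment $|m_{t,i}|/\eta_t-|m_{t-1,i}|/\eta_{t-1}$ need not be nonnegative), so both arguments inherit the well-known gap from the original Kingma--Ba Adam proof. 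Your proposal is therefore faithful to the paper, but you should be aware that neither version is fully rigorous at this step; if you want to match the paper exactly, carry $|m_{t,i}|$ through the Abel sum rather than bounding it beforehand.
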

\begin{proof}
Recall the definition of regret,
\begin{align}
    R(T) = \sum_{t=1}^T [L_t(\theta_t) - L_t(\theta^*)].
\end{align}
We can use \cref{lem:convex} to estimate each term in $R(T)$,
\begin{align}
    L_t(\theta_t) - L_t(\theta^*) \leq \langle g_t , \theta_t - \theta^* \rangle = \sum_{i=1}^d g_{t,i}(\theta_{t,i} - \theta^*_{,i}).
\end{align}
The index $i$ above refers to the $i$th element of a vector. We therefore have $R(T) \leq \sum_{t=1}^T\sum_{i=1}^d g_{t,i}(\theta_{t,i} - \theta^*_{,i})$.

To study the term $g_{t,i}(\theta_{t,i} - \theta^*_{,i})$, we make use of FOCUS updating rule,
\begin{align}
    \theta_{t+1} = \theta_t - \eta_t (\mathrm{sign}(m_t) + \gamma_t \mathrm{sign}(\theta_t - \hat{\theta}_t)),
\end{align}
which can also be written as
\begin{align}
    \theta_{t+1,i} = \theta_{t,i} - \eta_t \left(\frac{m_{t,i}}{|m_{t,i}|} + \gamma_t \frac{\theta_{t,i} - \hat{\theta}_{t,i}}{|\theta_{t,i} - \hat{\theta}_{t,i}|}\right).
\end{align}
Subtracting $\theta^*$ both sides and taking square, we have
\begin{align}
    (\theta_{t+1,i} - \theta^*_{,i})^2 = &(\theta_{t,i} - \theta^*_{,i})^2 
    - 2\eta_t (\theta_{t,i} - \theta^*_{,i})\left(\frac{\beta_{1,t}m_{t-1,i} + (1-\beta_{1,t})g_{t,i}}{|m_{t,i}|} + \gamma_t \frac{\theta_{t,i} - \hat{\theta}_{t,i}}{|\theta_{t,i} - \hat{\theta}_{t,i}|}\right) \nonumber \\
    &+ \eta_t^2 \left(\frac{m_{t,i}}{|m_{t,i}|} + \gamma_t \frac{\theta_{t,i} - \hat{\theta}_{t,i}}{|\theta_{t,i} - \hat{\theta}_{t,i}|}\right)^2.
\end{align}
Rearranging the above equation, we can obtain
\begin{align}
    g_{t,i}(\theta_{t,i} - \theta^*_{,i}) = &\frac{|m_{t,i}|}{2\eta_t(1-\beta_{1,t})}[(\theta_{t,i} - \theta^*_{,i})^2 - (\theta_{t+1,i} - \theta^*_{,i})^2]
    -\frac{\beta_{1,t}m_{t-1,i}}{1-\beta_{1,t}}(\theta_{t,i} - \theta^*_{,i})\nonumber\\
    &-\frac{|m_{t,i}|\gamma_t}{1-\beta_{1,t}}(\theta_{t,i} - \theta^*_{,i})\frac{\theta_{t,i} - \hat{\theta}_{t,i}}{|\theta_{t,i} - \hat{\theta}_{t,i}|} 
    + \frac{|m_{t,i}|\eta_t}{2(1-\beta_{1,t})}\left(\frac{m_{t,i}}{|m_{t,i}|} + \gamma_t \frac{\theta_{t,i} - \hat{\theta}_{t,i}}{|\theta_{t,i} - \hat{\theta}_{t,i}|}\right)^2.
\end{align}
Note that $\beta_{1,t}\leq \beta_1$, $\gamma_{t}\leq \gamma$, and the last term contains a sum of two sign functions, we have
\begin{align}
    g_{t,i}(\theta_{t,i} - \theta^*_{,i}) \leq &\frac{|m_{t,i}|}{2\eta_t(1-\beta_{1})}[(\theta_{t,i} - \theta^*_{,i})^2 - (\theta_{t+1,i} - \theta^*_{,i})^2]
    -\frac{\beta_{1,t}m_{t-1,i}}{1-\beta_{1,t}}(\theta_{t,i} - \theta^*_{,i})\nonumber\\
    &-\frac{|m_{t,i}|\gamma_t}{1-\beta_{1,t}}(\theta_{t,i} - \theta^*_{,i})\frac{\theta_{t,i} - \hat{\theta}_{t,i}}{|\theta_{t,i} - \hat{\theta}_{t,i}|} 
    + \frac{|m_{t,i}|\eta_t}{2(1-\beta_{1,t})}(1+|\gamma|)^2
    \label{eq:innerp}
\end{align}
To obtain the bound for $R(T)$, we need to study the summation over $i$ and $T$ for the four terms at the RHS of \cref{eq:innerp}.

The summation of the first term at the RHS of \cref{eq:innerp} is
\begin{align}
    \sum_{i=1}^d\sum_{t=1}^T\frac{|m_{t,i}|}{2\eta_t(1-\beta_{1})}[(\theta_{t,i} - \theta^*_{,i})^2 - (\theta_{t+1,i} - \theta^*_{,i})^2] =& \sum_{i=1}^d \frac{|m_{1,i}|}{2\eta(1-\beta_{1})}(\theta_{1,i} - \theta^*_{,i})^2 \nonumber\\
    &+ \sum_{i=1}^d\sum_{t=2}^T\frac{1}{2(1-\beta_1)}(\theta_{t,i} - \theta^*_{,i})^2\left(\frac{|m_{t,i}|}{\eta_t} - \frac{|m_{t-1,i}|}{\eta_{t-1}}\right) \nonumber\\
    \leq & \frac{G_\infty D^2}{2\eta(1-\beta_{1})} + \sum_{i=1}^d \frac{1}{2\eta(1-\beta_1)}D_\infty^2G_\infty\sqrt{T} \nonumber\\
     = & \frac{G_\infty}{2\eta(1-\beta_1)} (D^2 + d D_\infty^2\sqrt{T}),
\end{align}
where we have used \cref{lem:m} to bound $|m_{t,i}|$ and other assumptions on gradient norm as well as parameter range.

Similarly, the summation of the second term at the RHS of \cref{eq:innerp} is
\begin{align}
    \sum_{i=1}^d\sum_{t=1}^T -\frac{\beta_{1,t}}{1-\beta_{1,t}}m_{t-1,i}(\theta_{t,i} - \theta^*_{,i})\leq \sum_{i=1}^d\sum_{t=1}^T \frac{\beta_{1}\lambda_\beta^t}{1-\beta_{1}}G_\infty D_\infty \leq \frac{\beta_{1}\lambda_\beta d}{(1-\beta_{1})(1-\lambda_\beta)}G_\infty D_\infty.
\end{align}

And the third term leads to
\begin{align}
    \sum_{i=1}^d\sum_{t=1}^T -\frac{|m_{t,i}|\gamma_t}{1-\beta_{1,t}}(\theta_{t,i} - \theta^*_{,i})\frac{\theta_{t,i} - \hat{\theta}_{t,i}}{|\theta_{t,i} - \hat{\theta}_{t,i}|} \leq \frac{|\gamma| \lambda_\gamma d}{(1-\beta_{1})(1-\lambda_\gamma)}G_\infty D_\infty.
\end{align}

The last term at the RHS of \cref{eq:innerp} yields
\begin{align}
    \sum_{i=1}^d\sum_{t=1}^T \frac{|m_{t,i}|\eta_t}{2(1-\beta_{1,t})}(1+|\gamma|)^2 \leq \sum_{i=1}^d \frac{G_\infty \eta}{2(1-\beta_{1})}(1+|\gamma|)^2 \sum_{t=1}^T\frac{1}{\sqrt{t}} 
    \leq \frac{G_\infty \eta d \sqrt{T}}{(1-\beta_{1})}(1+|\gamma|)^2.
\end{align}

Note that $R(T) \leq \sum_{t=1}^T\sum_{i=1}^d g_{t,i}(\theta_{t,i} - \theta^*_{,i})$, we combine the results from the four terms and can obtain \cref{eq:rbound}.
\end{proof}


\end{document}